\documentclass{ifacconf}

\usepackage{natbib}
\usepackage{glossaries}
\usepackage{amsmath,amssymb,amsfonts}
\usepackage{graphicx}
\usepackage{algorithm,algorithmic}
\usepackage{textcomp}
\usepackage{enumerate}

\glsdisablehyper

\everymath=\expandafter{\the\everymath\displaystyle}
\usepackage{times}
\usepackage{xcolor}
\usepackage{scalerel,stackengine,mathtools,bm}
\usepackage{dsfont}
\usepackage{pgfplots}
\usepackage{siunitx}
\usepackage{xfrac}
\usepackage{caption,subcaption}
\usepackage{xparse}
\usepackage{xstring}
\usepackage{todonotes}
\usepackage{url}
\makeatletter\let\cl@part\relax\makeatother
\usepackage[capitalize,noabbrev]{cleveref}
\usepackage{booktabs}
\usepackage{tabularx}
\Crefname{part}{}{}

\newcommand{\hil}{\mathbb{H}}
\newcommand{\bhil}{\widetilde{\mathbb{H}}}
\newcommand{\bhilh}{\widehat{\mathbb{H}}}
\newcommand{\x}{\bm{x}}
\newcommand{\f}{\bm{f}}
\newcommand{\y}{\bm{y}}

\newcommand{\K}{\bm{K}}
\newcommand{\w}{\bm{w}}
\newcommand{\h}{\bm{h}}
\newcommand{\g}{\bm{g}}
\newcommand{\bmu}{\bm{\mu}}
\newcommand{\bsig}{\bm{\sigma}}
\newcommand{\bvphi}{\bm{\varphi}}

\newcommand{\tX}{\widetilde{X}}
\newcommand{\ty}{\widetilde{\bm{y}}}
\newcommand{\ts}{\textsuperscript}

\renewcommand{\footnotesize}{\scriptsize}

\newcolumntype{L}{>{$}l<{$}} 
\newcolumntype{C}{>{$}c<{$}} 

\makeatletter
\DeclareRobustCommand{\qed}{%
  \ifmmode 
  \else \leavevmode\unskip\penalty9999 \hbox{}\nobreak\hfill
  \fi
  \quad\hbox{\qedsymbol}}
\newcommand{\openbox}{\leavevmode
  \hbox to.77778em{%
  \hfil\vrule
  \vbox to.675em{\hrule width.6em\vfil\hrule}%
  \vrule\hfil}}
\newcommand{\qedsymbol}{\openbox}
\newenvironment{proof}[1][\proofname]{\par
  \normalfont
  \topsep6\p@\@plus6\p@ \trivlist
  \item[\hskip\labelsep\itshape
    #1.]\ignorespaces
}{%
  \qed\endtrivlist
}
\newcommand{\proofname}{Proof}
\makeatother

\usepackage{tikz}
\usetikzlibrary{matrix}
\usetikzlibrary{patterns,patterns.meta}
\usetikzlibrary{snakes}
\usetikzlibrary{decorations.pathmorphing,decorations.markings}
\usetikzlibrary{positioning,calc}
\usetikzlibrary{patterns,patterns.meta,snakes,shapes}
\usetikzlibrary{decorations.pathmorphing,decorations.markings}
\usetikzlibrary{calc}
\usetikzlibrary{arrows.meta, chains, scopes}
\usetikzlibrary{fit,backgrounds}
\usepackage{pgf}
\usepackage{pgfplots}
\usepackage{mathcomp}
\pgfplotsset{compat=1.17}
\usepgfplotslibrary{fillbetween}

\newacronym{desy}{DESY}{Deutsches Elektronen-Synchrotron}
\newacronym{euxfel}{European XFEL}{European X-Ray Free-Electron Laser}
\newacronym{xfel}{XFEL}{X-Ray Free-Electron Laser}
\newacronym{lbsync}{LbSync}{laser-based optical synchronization}
\newacronym[longplural={linear matrix inequalities}]{lmi}{LMI}{linear matrix inequality}
\newacronym{lo}{LO}{local oscillator}
\newacronym{lti}{LTI}{linear time-invariant}
\newacronym{mlo}{MLO}{master laser oscillator}
\newacronym{mo}{MO}{master timing reference oscillator}
\newacronym{pi}{PI}{proportional-integral}
\newacronym{pll}{PLL}{phase-locked loop}
\newacronym{ppl}{PPL}{pump-probe laser}
\newacronym{rf}{RF}{radio-frequency}
\newacronym{rms}{RMS}{root-mean-square}
\newacronym{slo}{SLO}{subsidiary laser oscillator}
\newacronym{snr}{SNR}{signal-to-noise ratio}
\newacronym{vco}{VCO}{voltage controller oscillator}
\newacronym{inr}{INR}{improvement-to-noise ratio}
\newacronym{bo}{BO}{Bayesian optimization}
\newacronym{mtbo}{MTBO}{multi-task Bayesian optimization}
\newacronym[longplural={Gaussian processes}]{gp}{GP}{Gaussian process}
\newacronym{ei}{EI}{expected improvement}
\newacronym{rkhs}{RKHS}{reproducing kernel Hilbert space}
\newacronym{ucb}{UCB}{upper confidence bound}
\newacronym{lcb}{LCB}{lower confidence bound}
\newacronym{pdf}{PDF}{probability density function}
\newacronym{cdf}{CDF}{cumulative distribution function}
\newacronym{smgo}{SMGO-\(\delta\)}{set membership global optimization}
\newacronym{awgn}{AWGN}{additive white Gaussian noise}
\newacronym{psd}{PSD}{power spectral density}
\newacronym{lsu}{LSU}{link stabilization unit}
\newacronym{oxc}{OXC}{optical cross-correlator}
\newacronym{icm}{ICM}{intrinsic co-regionalization model}
\newacronym{lmc}{LMC}{linear model of co-regionalization}
\newacronym{mle}{MLE}{maximum likelihood estimation}
\newacronym{mcmc}{MCMC}{Markov chain Monte Carlo}
\newacronym{map}{MAP}{maximum a posteriori} 
\newacronym{lkj}{LKJ}{Lewandowski-Kurowicka-Joe}

\newif\ifarxiv
\arxivtrue

\begin{document}
\begin{frontmatter}

\title{
Safe Bayesian Optimization for Uncertain Correlation Matrices in Linear Models of Co-Regionalization
}


\author[First]{Jannis Lübsen}
\address[First]{J. Lübsen is with the Institute of Control Systems, Hamburg University of Technology, Hamburg, Germany
        {\tt\small jannis.luebsen@tuhh.de}}%
\author[Second]{Annika Eichler}
\address[Second]{A. Eichler is with Institute of Control Systems, Hamburg University of Technology and with the Deutsches Elektronen-Synchrotron, Hamburg, Germany
        {\tt\small annika.eichler@tuhh.de}}%

\begin{abstract}
This paper extends safety guarantees for multi-task Bayesian optimization with uncertain co-regionalization matrices from intrinsic co-regionalization models to linear models of co-regionalization. The latter allows for more flexible modeling of the inter-task correlations by composing multiple features. We derive uniform error bounds for vector-valued functions sampled from a Gaussian process with a linear model of co-regionalization kernel. Furthermore, we show the potential performance gains of linear models of co-regionalization in a numerical comparison on a safe multi-task Bayesian optimization benchmark.
\end{abstract}

\begin{keyword}
Gaussian Processes, Bayesian Optimization, Safety Guarantees, Multi-Task Gaussian Processes, Simulation Informed
\end{keyword}

\end{frontmatter}


\section{Introduction}
PID controllers are still the most widely used controllers in industry. However, the tuning of these controllers is often done by trial and error, which can be time-consuming and costly. To overcome this issue, the field of \gls{bo} has gained popularity in recent years. \gls{bo} is a black-box optimization technique, i.e., initially almost no prior knowledge is necessary for setting up the optimization procedure. Then, by sequential evaluations of the unknown objective function, the algorithm constructs a probabilistic surrogate model which is used to predict the function values of the objective at unobserved points. These predictions are then used to identify new promising points which are likely to optimize the objective, see \cite{Shahriari2016} for an overview of \gls{bo} in general. By now, this optimization method has been established and is in practice widely used to automate parameter-tuning processes. For instance \cite{Coutinho2023,Hajieghrary2022} applied \gls{bo} to automatically tune and adapt PID controllers, and \cite{Shields2021} used \gls{bo} to optimize ingredients of chemical reactions. Furthermore, \gls{bo} became a cornerstone for optimizing large scale research infrastructures such as particle accelerators, see \cite{Kirschner2019,Kaiser2024}.

Despite the great success of \gls{bo}, the method often becomes infeasible when applied to high-dimensional problems. In general, a \gls{gp}, see \cite{Williams2006}, is utilized to model the unknown objective which suffers from the curse of dimensionality, i.e., the required number of observations of the objective increases exponentially with the dimension. In many practical applications there exist first-principles models of the process to be optimized usually available in the form of simulations. There are multiple methodologies to make use of this additional information. Meta learning, see \cite{vanschoren2018}, can be applied to determine prior distributions as done by \cite{rothfuss21a,pan24b}. These priors can be seen as a warm start for the optimization process. Another approach is to apply \gls{bo} on the simulation and the real process jointly which is in literature known as \gls{mtbo}, initially proposed by \cite{Swersky2013} and successfully applied in practice, see \cite{Taylor2023}. The foundation of this approach, multi-task \gls{gp} prediction, was originally introduced by \cite{Bonilla2007}. This technique uses correlation matrices to capture the influence between various tasks, which are learned online from the available data.

 A final, critical consideration for practical application is the enforcement of safety guarantees. For example, when tuning controllers, the search space must be constrained to parameters that yield a stable closed-loop system. Especially if the response of the system is fast this safety mechanism can barely be handled externally by a fallback mechanism as for the \gls{lbsync} at \gls{euxfel}, see \cite{Schulz2015}. The primary challenge is that the system dynamics, and therefore the safe parameter region, are initially unknown. This necessitates estimating the safe region from data, for which uniform error bounds defined as
\begin{align}
    \label{eq:uniform_error_bound}
    \mathbb{P}\{|f(\x)-\mu(\x)|\leq \beta^\frac{1}{2}\sigma(\x),~\forall \x\in\mathcal{X}\}\geq 1-\delta,
\end{align} 
are employed.
The error bound states that the deviation of the unknown function $f$ can be bounded from the posterior mean
$\mu$ by scaling the posterior standard deviation $\sigma$ with the factor $\beta^\frac{1}{2}$. Depending on the assumption on $f$ different scaling factors can be obtained, see \cite{Snirivas2010,chowdhury17a} for frequentist perspective and \cite{Lederer2019} for a Bayesian perspective.

In the context of \gls{mtbo}, safety guarantees are more complex because the correlation matrices are uncertain. While other hyperparameters of the \gls{gp} may also be unknown, they can typically be selected conservatively. \cite{Luebsen2025} provide an overview of how to estimate this correlation uncertainty and incorporate it to derive a new scaling factor $\bar{\beta}$. Their guarantees were derived under the consideration of an \gls{icm}, which will be recalled in \cref{subsec:intrinsic_correlation_matrices}. A natural extension of \gls{icm} is the \gls{lmc} which is a composition of multiple \gls{icm}. This allows for a more flexible modeling of the inter-task correlations, as it can capture multiple features. 

In this manuscript, we derive safety guarantees for the \gls{lmc} under a Bayesian point of view. In addition, we show through a numerical comparison the improved empirical performance of using \glspl{lmc} compared to \glspl{icm} in \gls{mtbo}. 
\section{Preliminaries}
\label{sec:preliminaries}
In vanilla \gls{bo}, see \cref{fig:bo_loop}, \glspl{gp} are used to model an unknown scalar-valued objective function \(f:\mathcal{X} \to \mathbb{R}\), where the domain \(\mathcal{X} \subset \mathbb{R}^d\) is compact. It is assumed that the unknown function $f$ is continuous, which ensures an arbitrary good approximation of $f$ using universal kernels whose underlying \gls{rkhs} is dense in the set of continuous functions. In real applications the exact function values are typically not accessible, rather noisy observations are made. This behavior is modeled by additive Gaussian noise \(\epsilon \sim \mathcal{N}(0,\sigma_n^2)\), i.e., $y = f(\x)+\epsilon$, where $y$ is the measured value and \(\sigma_n^2\) denotes the noise variance.
Furthermore, we define the set of observations by \(\mathcal{D}\coloneqq\left\{(\x_k,y_k), k=1,\dots, N\right\}\) which is composed of the evaluated inputs combined with the corresponding observations. This set can be considered as the training set. A more compact notation of all inputs of \(\mathcal{D}\) is given by the matrix \(X = [\x_{1}, \dots, \x_{N}]^\top\) and of all observations by the vector \(\bm{y}=[y_1,\dots,y_N]^\top\). With this data set, the \gls{gp} creates a probabilistic surrogate model to predict $f(\x),~\x\in\mathcal{X}$. These predictions serve as inputs for an acquisition function $\alpha(\x)$, which identifies new promising inputs likely to minimize the objective.
Some common choices for acquisition functions are \gls{ucb}, (log) expected improvement \cite{Jones1998}, or predictive entropy search \cite{hernandez14}.
If safety constraints are considered, the optimization problem can be formulated as
\begin{align*}
    \min_{\x\in\mathcal{X}} & ~f(\x) \\
    \text{s.t.} & ~g(\x)\leq 0,
\end{align*} where $g(\x)$ is an unknown constraint function, usually modeled by a separate \gls{gp}. 

\subsection{Gaussian Processes and Reproducing Kernel Hilbert Spaces}
\label{subsec:bayesian_optimization}

A \gls{gp} is fully defined by a mean function \(m(\x)\) and a kernel \(k(\x,\x^{\prime}): \mathcal{X}\times\mathcal{X} \to \mathbb{R}\). In the context of \glspl{gp}, the kernel is also referred to as a covariance function. The \gls{gp} encodes prior knowledge of the unknown function $f$, i.e., $p(f(\x)) = \mathcal{GP}(m(\x),k(\x,\x^{\prime}))$ where the $m(\x)$ may be replaced by the zero function without loss of generality. The kernel determines the dependency between function values at different inputs which is expressed by the covariance operator 
\begin{align}
\mathrm{cov}(f(\x),f(\x^{\prime}))=k(\x,\x^{\prime}).
\label{eq:single_task_cov}
\end{align} Commonly used kernels are the spectral mixture \cite{wilson13}, Mat\'ern \cite{Genton01} or squared exponential kernel, where the latter is defined as $k_\mathrm{SE}(\x,\x^{\prime}) = \sigma_f^2\exp\left(-\frac{1}{2}(\x-\x^{\prime})^\top\Delta^{-2}(\x-\x^{\prime})\right)$ with \(\Delta = \mathrm{diag}(\bm{\vartheta})\).
The signal variance \(\sigma_f^2\), the lengthscales \(\bm{\vartheta}\) and the noise variance \(\sigma_n^2\) constitute the hyperparameters, allowing for adjustments of the kernel.

Given the set of observations and the prior of $f$, the posterior $p(f|X,\bm{y}) = \mathcal{GP}(\mu(\x),\sigma^2(\x))$ can be computed by applying Bayes' rule.
As shown by \cite{Williams2006} the posterior is also Gaussian given by
\begin{equation}
\label{eq:posterior_gp}
\begin{aligned}
        \mu(\x) &= K_X(\x)\left(K+\sigma_n^2I\right)^{-1}\bm{y}\\
        \sigma^2(\x) &= k(\x,\x)-K_X(\x)\left(K+\sigma_n^2I\right)^{-1}K_X(\x)^\top,   
\end{aligned}
\end{equation}     
where $K_X(\x) = K(\x,X)$ and $K = K(X,X)$ is the Gram matrix of the training data.
\begin{figure}[t]
    \centering

\begin{tikzpicture}[
  node distance=1.8cm and 2.5cm,
  box/.style={draw, thick, minimum width=3.2cm, minimum height=1.5cm, align=center},
  arrow/.style={->, thick, >=Latex},
  label/.style={font=\small, midway, fill=white, inner sep=1pt}
]

\node (top_left) {};
\node (blackbox) [draw, rounded corners=3pt,  right=1.5cm of top_left, fill=black, text=white, inner sep=8pt, outer sep=0pt] {Black Box};
\node [circle, draw, thick, inner sep=1pt, right=2cm of blackbox] (plus) {+};
\node (top_right) [right=4cm of blackbox] {};
\node (lower_left) [below=1cm of top_left] {};
\node (lower_right) [below=1cm of top_right] {};
\node (gp) [draw, rounded corners=3pt,inner sep=4pt, outer sep=0pt, left=.5cm of lower_right] {Gaussian Process};
\node (acq) [draw, rounded corners=3pt,inner sep=4pt, outer sep=0pt, left=2cm of gp] { Acq Function};

\draw[arrow] (top_left.center) -- (blackbox) node[midway,above, font=\scriptsize] {Input} node[midway,below, font=\scriptsize] {$x$};
\draw[arrow] (blackbox) -- (plus) node[midway,above, font=\scriptsize] {Output} node[midway,below, font=\scriptsize] {$f(x)$};
\draw (plus) -- (top_right.center) node[midway,above, font=\scriptsize] {Measurement} node[midway,below, font=\scriptsize] {$y=f(x) + \varepsilon$};
\draw[arrow] (top_right.center) |- (gp);
\draw[arrow] (gp) -- (acq) node[midway,above, font=\scriptsize] {Posterior} node[midway,below, font=\scriptsize] {$\mu(x), \sigma^2(x)$};
\draw (acq) -| (top_left.center);
\draw[arrow] ($(plus)+(0,1.)$) -- (plus) node[midway,below, font=\scriptsize, rotate=90]{$\varepsilon$} node[midway,above, font=\scriptsize, rotate=90] {Noise};

\end{tikzpicture}
    \caption{Schematic representation of the Bayesian optimization loop.}
    \label{fig:bo_loop}
\end{figure}

\subsection{Intrinsic Co-regionalization Model}
\label{subsec:intrinsic_correlation_matrices}
In this section, we quickly recapitulate the \gls{icm} which is mainly used in the context of multi-task \gls{bo}. To do so, we start with a short introduction to vector-valued \glspl{rkhs} and then continue with the safety guarantees for uncertain correlation matrices from frequentist and Bayesian perspectives.

The \gls{icm} describes the internal correlation structure between different outputs of vector-valued functions. Consider $f:\mathcal{X}\to\mathbb{R}^u$ which is a vector-valued function from some compact set $\mathcal{X}\subset \mathbb{R}^d$, that can be decomposed into a mixture of $m\geq u$ latent functions, i.e., 
\begin{align*}
    \f(\x) = B^\top \h(\x) = \sum_{i=1}^{m} B^{(i)} h^{(i)}(\x),
\end{align*}
where all $h^{(i)}\in\hil$ belong to the same \gls{rkhs}, $B\in\mathbb{R}^{m\times u}$ and $B^{(i)}$ denotes the i\ts{th} row of $B$. The underlying kernel is defined as 
$K(\x,\x^{\prime},t,t^{\prime}) = \Sigma_{t,t^{\prime}} k(\x,\x^{\prime}),$ 
where $\Sigma = B^\top B$ is the correlation matrix with $t,t^{\prime}\in\{1,\dots,u\}$ denote the tasks, and $k(\x,\x^{\prime})$ is the scalar-valued basis kernel. Kernels with separable task and input components belong to the class of separable kernels which are mostly used in literature \cite{Luebsen2024,Luebsen2025,Swersky2013,Taylor2023}.  

The \gls{rkhs} for vector-valued functions is induced by the tensor vector space $\mathbb{R}^u\bigotimes \hil$ spanned by elements of the form $\w\otimes \h$ with $\w\in\mathbb{R}^u$ and $h\in\hil$, see \cite{Caponnetto2008}. All elements $\w_1\otimes \h_1,\w_2\otimes \h_2\in\mathbb{R}^u\bigotimes \hil$ with constant $c\in\mathbb{R}$ satisfy the multilinear relation
 \begin{align*}
     c(\w_1+\w_2)\otimes \h_1 &= (\w_1+\w_2) \otimes c\h_1\\ &= c(\w_1\otimes \h_1)+c(\w_2\otimes \h_1).
 \end{align*}
 To obtain a dot product space, we define 
 \begin{align*}
     \langle \w_1\otimes \h_1,\w_2\otimes \h_2\rangle = \langle \w_1,\w_2\rangle\langle \h_1,\h_2\rangle_\hil,
 \end{align*}
 and take the completion with respect to the induced norm of the elements in $\mathbb{R}^u\bigotimes \hil$. Thus, the tensor vector space $\mathbb{R}^u\bigotimes \hil$ becomes a Hilbert space $\bhil$.

We denote $\bvphi \colon  \mathcal{X} \to \hil$ as the feature map of the base kernel $k(\x,\x^{\prime})$.
The feature map of $\bhil$ is defined as
\begin{equation}
\label{eq:feature_map_MT}
\begin{aligned}
\Phi \colon \mathcal{X} &\to \mathcal{L}(\mathbb{R}^u, \bhil) \\
\Phi(\x) \colon \w &\mapsto B \w \otimes \bvphi(\x)
\end{aligned}
\end{equation}
for all $\w \in \mathbb{R}^u$, and the adjoint feature map as
\begin{equation}
\label{eq:adjoint_feature_map_MT}
\begin{aligned}
\Phi^\star \colon \mathcal{X} &\to \mathcal{L}(\bhil, \mathbb{R}^u) \\
\Phi^\star(\x) \colon \w \otimes \h &\mapsto B^\top \w \langle\bvphi(\x), \h\rangle_\hil
\end{aligned}
\end{equation}
for all $(\w \otimes \h) \in \bhil$, where $\mathcal{L}(\mathbb{R}^u,\bhil)$ denotes the set of linear operators from $\mathbb{R}^u$ to $\bhil$, and $\mathcal{L}(\bhil,\mathbb{R}^u)$ is defined analogously. With these definitions, it is easy to see that
\begin{align*}
\langle\Phi_\Sigma(\x), \Phi_\Sigma(\x')\rangle_{\bhil_\Sigma} &= \Phi_\Sigma^\star(\x)\Phi_\Sigma(\x') \\
&= B^\top B \langle \bvphi(\x), \bvphi(\x')\rangle_\hil \\
&= \Sigma k(\x, \x').
\end{align*}
The subscript \(\Sigma\) indicates that the feature map corresponds to the \gls{rkhs} induced by the correlation matrix \(\Sigma\).

The evaluation functional can be represented by the dot product
 \begin{align*}
    \f(\x) &= \langle\Phi(\x),\f_w\otimes \f_h\rangle_{\bhil_\Sigma}\\ &= B^\top \f_w\langle\bvphi(\x),\f_h\rangle_\hil\\ &= B^\top \f_w f(\x), 
 \end{align*}
 which is denoted as the reproducing property. We use the notation $\langle\cdot,\cdot\rangle_{\bhil_\Sigma}$ to denote in which \gls{rkhs} the inner product is taken.
 Note that since vector-valued functions are considered, the posterior mean $\bmu(\x)$ and variance $\bsig(\x)$ are vector-valued which is emphasized by the bold notation. Moreover, we will use the notation $\bmu_\Sigma(\x)$ and $\bsig_\Sigma(\x)$ to indicate that both functions live in $\bhil_\Sigma$.

\subsection{Safety Guarantees}
The central challenge in ensuring safety for multi-task \gls{bo} is the uncertainty surrounding the correlation matrix $\Sigma$. Our approach addresses this by assuming the true matrix is contained within a known set, $\mathcal{C}_\rho$, with a probability of $1-\rho$. In a Bayesian context, this set can be inferred from data using approximation methods like \gls{mcmc} sampling, as demonstrated by \cite{Luebsen2024,Luebsen2025}. We will now review the multi-task safety guarantees for the \gls{icm} case from this Bayesian perspective. To this end, we impose the following assumption.

 \begin{assum}
    \label{assum:sample_of_GP}
        The unknown vector-valued function $\f:\mathcal{X}\to\mathbb{R}^u$ is a sample from a \gls{gp} with zero mean, multi-task kernel $K(\x,\x^{\prime}) = \Sigma k(\x,\x^{\prime})$ and hyper-prior $\Sigma\sim p(\Sigma)$ with compact support, i.e., $\f(\x)\sim\mathcal{GP}(\bm{0},K(\x,\x^{\prime}))$.  The base kernel $k(\x,\x^{\prime})$ is at least four times partially differentiable on $\mathcal{X}$.
\end{assum}

The differentiability assumption on the kernel is necessary to ensure the existence of the Lipschitz constant $L_f$ of the sample paths of the \gls{gp}. Moreover, the moduli of continuity $\omega_\mu$ defined as $ |\bmu(\x)-\bmu(\y)| \leq \omega_{\mu}(\|\x-\y\|_p)$, and $\omega_\sigma$ defined as $|\bsig(\x)-\bsig(\y)| \leq \omega_{\sigma}(\|\x-\y\|_p)$ of the posterior mean and standard deviation, respectively, are required. Both quantities can be upper bounded using solely the Lipschitz constant of the kernel, see \cite{Luebsen2025} for details.

Furthermore, from \cite{Luebsen2025}, we know that for any symmetric positive definite $\Sigma$ and $\Sigma^{\prime}$ the underlying \glspl{rkhs} are equivalent. This implies that every element $\f\in\bhil_\Sigma$ also belongs to $\bhil_{\Sigma^{\prime}}$ and vice versa. In particular, every element $\f\in\bhil_\Sigma$ can be represented in $\bhil_{\Sigma^{\prime}}$ in terms of a linear operator $\widetilde{L}$, such that 
\begin{align} 
    \bhil_\Sigma = \widetilde{L}\bhil_{\Sigma^{\prime}}:= \{\widetilde{L} \f : \f \in \bhil_{\Sigma^\prime}\}.
\label{eq:equivalent_rkhs}
\end{align}
This property holds because both \glspl{rkhs} share the same single task \gls{rkhs} $\hil$ differing only in their mixture matrix $B$. Provided that $B$ is injective, the operator $\widetilde{L}$ and its inverse are well-defined. The operator $\widetilde{L}$ allows to describe a function $\f\in\bhil_\Sigma$ as a function $\f^\prime\in\bhil_{\Sigma^\prime}$, a relationship that is central to the proof of the following theorem.
The following theorem is due to \cite{Luebsen2025}.

 \begin{thm}
    \label{th:robust_scaling_factor}
    \hfill
    Given Assumption~\ref{assum:sample_of_GP}, $\rho>0$ and $\delta > 0$. Select any $\Sigma^{\prime}\in\mathcal{C}_{\rho}$ which should be used for inference and let 
    \begin{alignat*}{1}
        &\beta=2\log\left(\frac{|\mathcal{I}|}{\delta}\right)\\ &\psi=L_f\tau+\omega_\mu(\tau)+\beta^\frac{1}{2}\omega_{\sigma}(\tau)\\
        &\nu^2 = \max_{\Sigma\in\mathcal{C}_\rho}\bigg\{\|\bmu_{\Sigma'} - \widetilde{L}^\star \bmu_\Sigma\|_{\bhil_{\Sigma'}}^2\\ & \hspace{1cm} + \frac{1}{\sigma_n^2} \sum_{n=1}^N \|\bmu_{\Sigma}(\x_n)-\bmu_{\Sigma^{\prime}}(\x_n)\|_2^2\bigg\}\\
        &\gamma^2 =  \max_{\Sigma\in\mathcal{C}_\rho}\|\Sigma'^{-1}\Sigma\|_2,
    \end{alignat*}
    where $\widetilde{L}^\star$ is the adjoint operator of $\widetilde{L}$ as defined in \eqref{eq:equivalent_rkhs}, and $|\mathcal{I}|$ is the cardinality of the discretization of $\mathcal{X}$ with mesh size $\tau$.
    Then, with $\bar{\beta} = (\nu+\gamma\beta^\frac{1}{2})^2$ we have with probability $(1-\delta)(1-\rho)$
    \begin{align*}
        |\f(\x)-\bmu_{\Sigma^{\prime}}(\x)|\leq \bar{\beta}^\frac{1}{2}\bsig_{\Sigma^{\prime}}(\x)+\psi, 
        \end{align*}
    for all $\x\in\mathcal{X}$.
\end{thm}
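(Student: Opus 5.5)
The argument has three parts. First we condition on the event that the true correlation matrix lies in $\mathcal{C}_\rho$, which has probability $1-\rho$. On this event we prove a uniform bound on a finite grid with probability at least $1-\delta$. Finally we extend the bound to all of $\mathcal{X}$ by continuity, which costs the additive term $\psi$. Since the hyper-prior and the sample path enter as independent layers, the two probabilities multiply to $(1-\delta)(1-\rho)$.

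\textbf{Step 1 (nominal bound on the grid).} Fix the true $\Sigma\in\mathcal{C}_\rho$. Under Assumption~\ref{assum:sample_of_GP}, conditioned on the data, $\f(\x)$ is Gaussian for each $\x$ with mean $\bmu_\Sigma(\x)$ and marginal standard deviations $\bsig_\Sigma(\x)$. A Gaussian tail bound with a union bound over the $|\mathcal{I}|$ grid points gives, with probability at least $1-\delta$,
\begin{align*}
|\f(\x)-\bmu_\Sigma(\x)|\le\beta^{1/2}\bsig_\Sigma(\x)\quad\text{for all }\x\in\mathcal{I},
\end{align*}
where $\beta=2\log(|\mathcal{I}|/\delta)$.

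\textbf{Step 2 (change of inference matrix).} Next we pass from $\Sigma$ to the matrix $\Sigma'$ actually used for inference. We write
\begin{align*}
|\f-\bmu_{\Sigma'}|\le|\f-\bmu_\Sigma|+|\bmu_\Sigma-\bmu_{\Sigma'}|.
\end{align*}

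For the variance term we compare posterior variances. Writing $\Sigma=\Sigma'^{1/2}(\Sigma'^{-1/2}\Sigma\Sigma'^{-1/2})\Sigma'^{1/2}$ and using a Loewner-order argument on the RKHS representation $\bsig^2(\x)=\sup\{|\langle g,\Phi(\x)\rangle|^2\}$ over the regularized unit ball, we expect to show $\bsig_\Sigma(\x)\le\gamma\,\bsig_{\Sigma'}(\x)$ with $\gamma^2=\|\Sigma'^{-1}\Sigma\|_2$.

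For the mean difference we use the equivalence \eqref{eq:equivalent_rkhs}. The posterior mean is the minimizer of the regularized least-squares functional in $\bhil$. Hence $\bmu_\Sigma-\bmu_{\Sigma'}$ can be represented in $\bhil_{\Sigma'}$ through $\widetilde{L}^\star$. The standard inequality $|g(\x)|\le\|g\|_{\text{reg}}\,\bsig_{\Sigma'}(\x)$ then applies, with regularized norm
\begin{align*}
\|g\|_{\text{reg}}^2=\|g\|_{\bhil_{\Sigma'}}^2+\sigma_n^{-2}\sum_n\|g(\x_n)\|^2.
\end{align*}
This gives $|\bmu_\Sigma(\x)-\bmu_{\Sigma'}(\x)|\le\nu\,\bsig_{\Sigma'}(\x)$, where taking the maximum over $\mathcal{C}_\rho$ removes the dependence on the unknown $\Sigma$.

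Combining the two pieces yields the grid bound $|\f-\bmu_{\Sigma'}|\le(\nu+\gamma\beta^{1/2})\bsig_{\Sigma'}$ on $\mathcal{I}$.

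\textbf{Step 3 (extension to $\mathcal{X}$).} For arbitrary $\x\in\mathcal{X}$, take the nearest grid point $[\x]\in\mathcal{I}$, so that $\|\x-[\x]\|\le\tau$. The Lipschitz constant $L_f$ controls the change in $\f$, and the moduli $\omega_\mu$, $\omega_\sigma$ control the changes in the posterior mean and standard deviation. Together these produce the additive slack $\psi$. The $\beta^{1/2}$ weighting of $\omega_\sigma$ appearing in $\psi$ suggests carrying out the discretization step before the change of $\Sigma$.

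The main obstacle is the mean-difference bound in Step 2. It requires identifying the right element of $\bhil_{\Sigma'}$ whose regularized norm is exactly the quantity inside $\nu^2$, and verifying the posterior-variance characterization in the vector-valued setting. The first thing to try is to write $\bmu_{\Sigma'}-\widetilde{L}^\star\bmu_\Sigma$ via the optimality conditions of kernel ridge regression.
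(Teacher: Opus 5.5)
Your proposal is correct and has the same skeleton as the paper's outline. It uses the Gaussian tail plus union bound on $\mathcal{I}$ for the true $\Sigma$, then the Lipschitz/moduli extension to $\mathcal{X}$ producing $\psi$. It closes with the triangle inequality $|\f-\bmu_{\Sigma'}|\le|\f-\bmu_\Sigma|+|\bmu_\Sigma-\bmu_{\Sigma'}|$ and the $\gamma$ and $\nu$ bounds.

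The genuine difference is how you get the $\nu$ bound. The paper follows the ICM argument reproduced in the proof of Lemma~\ref{lemma:nu_lmc}. It constructs $T=\Psi_\Sigma^{-1/2}\widetilde L\Psi_{\Sigma'}^{1/2}$ between the two posterior RKHSs and writes the mean difference as an inner product with $K^P_{\Sigma'}(\cdot,\x)$. It then applies Cauchy--Schwarz and expands the squared norm into three terms that recombine into the bracket in $\nu^2$. You instead use that the posterior RKHS of $\Sigma'$ is $\bhil_{\Sigma'}$ with the regularized norm, and apply $|g(\x)|\le\|g\|_{\mathrm{reg}}\bsig_{\Sigma'}(\x)$ directly. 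This is shorter and makes the mechanism transparent.

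The step you flag as the main obstacle is not one: take $g=\bmu_\Sigma-\bmu_{\Sigma'}$ itself. Since $\Phi^\star_{\Sigma'}(\x)\widetilde L^\star=\Phi^\star_\Sigma(\x)$, the element $\widetilde L^\star\bmu_\Sigma$ represents the same function as $\bmu_\Sigma$ in $\bhil_{\Sigma'}$. Hence $\|g\|_{\bhil_{\Sigma'}}=\|\bmu_{\Sigma'}-\widetilde L^\star\bmu_\Sigma\|_{\bhil_{\Sigma'}}$, and $\|g\|_{\mathrm{reg}}^2$ is exactly the bracket in $\nu^2$. No ridge-regression optimality conditions are needed.

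Your regularized-ball argument for $\gamma$ is equivalent to the kernel-monotonicity argument used in the proof of Lemma~\ref{lemma:gamma_lmc}. It needs $\Sigma\preceq\gamma^2\Sigma'$, and also $\gamma\ge1$ to absorb the data-fit term. The latter holds because $\Sigma'\in\mathcal{C}_\rho$.

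Two points need tidying. First, the order you suggest at the end is the right one, and it is the paper's. Extend $|\f-\bmu_\Sigma|\le\beta^{1/2}\bsig_\Sigma$ from $\mathcal{I}$ to $\mathcal{X}$ first, and only then apply the $\nu$ and $\gamma$ bounds, which hold at every $\x\in\mathcal{X}$. Extending your Step~2 grid bound instead would give $\bar\beta^{1/2}\omega_\sigma(\tau)$ in place of $\beta^{1/2}\omega_\sigma(\tau)$. The price is that $\omega_\mu,\omega_\sigma$ are then moduli of $\bmu_\Sigma,\bsig_\Sigma$ for the unknown $\Sigma$, so they must hold uniformly over $\mathcal{C}_\rho$.

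Second, $\f$ and $\Sigma$ are not independent layers. The factor $(1-\delta)(1-\rho)$ comes from conditioning: the bound holds with conditional probability at least $1-\delta$ for each fixed $\Sigma\in\mathcal{C}_\rho$, and you integrate over the event $\Sigma\in\mathcal{C}_\rho$.
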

    Note that the operator $L$ used in the original theorem from \cite{Luebsen2025}, corresponds to $\widetilde{L}^\star$ in our definition in \eqref{eq:equivalent_rkhs}.
\begin{proof}
    The proof of this theorem is quite long and involves multiple steps in between. We provide an outline in \ifarxiv{\cref{sec:proof_outline}} \else{\cite{Luebsen2026}}\fi
\end{proof}

Theorem~\ref{th:robust_scaling_factor} shows that the uniform error bound in \eqref{eq:uniform_error_bound} can be preserved for uncertain correlation matrices by adjusting the scaling factor from $\beta$ to $\bar{\beta}$. As illustrated in the proof outline, the role of $\nu$ is to bound the deviation of the posterior mean when different correlation matrices are considered. 
An equivalent role is taken by $\gamma$ which bounds the deviation of the posterior standard deviation.

\section{Safe Bayesian Optimization for LMCs}

\subsection{The Linear Model of Co-Regionalization}
\label{subsec:lmc}
The expressivity of the \gls{icm} is limited, as it only allows for a single shared feature across tasks. Consequently, the model cannot capture complex task correlations that arise when different tasks are governed by distinct features. This misspecification can degrade posterior predictions and lead to overly conservative confidence sets. The \gls{lmc} from \cite{Alvarez2011} is a natural extension of the \gls{icm} as it allows for a more flexible modeling of the inter-task correlations. In general, the \gls{lmc} assumes that the function is composed as 
\begin{align}
    \label{eq:lmc}
    \f(\x) = \sum_{i=1}^{H} B_i^\top \h_i(\x) = \sum_{i=1}^{H} \sum_{j=1}^{m} B^{(j)}_i h^{(j)}_i(\x),
\end{align} 
where $\f(\x):\mathcal{X}\to \mathbb{R}^u$, $B_i\in\mathbb{R}^{m\times u}$ are injective mixture matrices and $h^{(j)}_i\in\hil_i$ belong to some \gls{rkhs}. In accordance to \eqref{eq:lmc}, the kernel function of the underlying \gls{rkhs} is defined as 
\begin{align}
\label{eq:lmc_kernel}
    &\K_\Sigma(\x,\x^{\prime}) = \sum_{i=1}^H \Sigma_i k_i(\x,\x^{\prime}),
\end{align}
where $\Sigma_i = B_i^\top B_i$ are the correlation matrices and $k_i(\x,\x^{\prime})$ are the basis kernels. The subscript $\Sigma$ indicates the dependence of the kernel on the correlation matrices $\Sigma_i$. Accordingly, the subscript $\Sigma^{\prime}$ is used if the correlation matrices $\Sigma_i^{\prime} = B_i^{\prime \top} B_i^{\prime}$ are used.
For notational convenience, we overload the notation $\Sigma$ in the \gls{lmc} case to denote 
\begin{align*}
    \Sigma = (\Sigma_1,\dots,\Sigma_H),
\end{align*}
and $\Sigma^{\prime}$ analogously.
The \gls{rkhs} corresponding to $\K_\Sigma$ is denoted by $\bhilh_\Sigma = \bigoplus_{i=1}^{H}\bhil_{\Sigma_i}$, where $\bigoplus$ denotes the direct sum of Hilbert spaces. In particular, every element $f\in\bhilh$ can be written as
 \[\f = \oplus_{i=1}^H \f_i = \oplus_{i=1}^H \f_{w,i} \otimes \f_{h,i}.\] All properties of $\bhil$ in \eqref{eq:feature_map_MT} and \eqref{eq:adjoint_feature_map_MT} apply for $\bhilh$. The inner product of $\bhilh$ is induced by the inner products of the spaces $\bhil_i$ such that
\begin{align*}
    \langle\cdot,\cdot\rangle_{\bhilh_\Sigma} = \sum_{i=1}^{H} \langle\cdot,\cdot\rangle_{\bhil_{\Sigma_i}} = \sum_{i=1}^{H} \langle\cdot,\cdot\rangle_{\mathbb{R}^u}\langle\cdot,\cdot\rangle_{\hil_i}.
\end{align*}

To perform inference \eqref{eq:posterior_gp} for the multi-task setting, we define the new multi-task Gram matrix as
\begin{align}
\label{eq:multi_task_gram_matrix}
\Gamma_\Sigma = \Gamma_\Sigma(\tilde{X},\tilde{X})= 
\begin{bmatrix} 
\Gamma_{1,1} & \dots & \Gamma_{1,u}\\
\vdots & \ddots & \vdots \\
\Gamma_{u,1} & \dots & \Gamma_{u,u}
\end{bmatrix},    
\end{align}
where $\tX= [X_1^\top,\dots,X_u^\top]$, $\ty = [\y_1^\top,\dots,\y_u^\top]$ and $\Gamma_{t,t^{\prime}}$ are the Gram matrices using data from tasks $t$ and $t^{\prime}$, in particular 
\[ \Gamma_{t,t^{\prime}} = \sum_{i=1}^{H} \sigma^2_{i_{t,t^\prime}} k_i(X_t,X_{t^{\prime}})\] with $\sigma^2_{i_{t,t^\prime}}$ being the $(t,t^{\prime})$ entry of $\Sigma_i$. 

\subsection{Safe Bayesian Optimization for LMCs}
From Theorem~\ref{th:robust_scaling_factor} we can see that the regularity characteristics given by $\omega_\mu$ and $\omega_\sigma$ change since they depend on the Lipschitz constant of the kernel $K_\Sigma$. For the \gls{icm} case we can see from \cite{Luebsen2025} that this is given by $L_K = q L_k$ where $L_k$ is the Lipschitz constant of the base kernel $k$ and $q = \max_i \Sigma_{ii}$; the maximum of the diagonal entry of $\Sigma$. Hence, one can see that the upper bounds on $\omega_\mu$ and $\omega_\sigma$ can be easily extended by taking in addition the maximum of all feature correlation matrices $\Sigma_j$. Since $L_f$ can be extended similarly or computed numerically, we exclude these details as they comprise purely technical algebraic manipulations.
More interesting changes appear in the terms $\nu$ and $\gamma$, in particular, in the definition of the linear operator $\widetilde{L}$. In the following lemma, we extend the definition of $\widetilde{L}$ for the \gls{lmc} denoted by $L$
\begin{lem}
    \label{lemma:linear_oper}
        There is a linear operator $L:\bhilh^{\prime} \to \bhilh$ such that
        \begin{enumerate}[(i)]
            \item $\bhilh = L\bhilh^{\prime}$,
            \item $L\f = \oplus_{i=1}^H B_i {B_i^\prime}^{-1}\f_{w,i}\otimes \f_{h,i}$,
        \end{enumerate}
\end{lem}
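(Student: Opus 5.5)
We will define $L$ componentwise on the direct sum and reduce everything to the \gls{icm} operator $\widetilde{L}$ from \eqref{eq:equivalent_rkhs}. Since $\bhilh^{\prime}=\bigoplus_{i=1}^H\bhil_{\Sigma_i^{\prime}}$ and $\bhilh=\bigoplus_{i=1}^H\bhil_{\Sigma_i}$, and the $i$\ts{th} summands share the same scalar \gls{rkhs} $\hil_i$, we set $L=\oplus_{i=1}^H L_i$ with $L_i:\bhil_{\Sigma_i^{\prime}}\to\bhil_{\Sigma_i}$. Each $L_i$ is first defined on elementary tensors by $L_i(\w\otimes\h)=B_i{B_i^{\prime}}^{-1}\w\otimes\h$, and then extended.

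Here ${B_i^{\prime}}^{-1}$ has to be read as a left inverse, e.g.\ the pseudo-inverse ${B_i^{\prime}}^{+}=(B_i^{\prime\top}B_i^{\prime})^{-1}B_i^{\prime\top}$. This exists because $B_i^{\prime}$ is injective and $\Sigma_i^{\prime}$ is positive definite. Then $M_i:=B_i{B_i^{\prime}}^{-1}$ is a fixed finite-dimensional matrix acting on the $\w$-factor, and $L_i=M_i\otimes\mathrm{id}_{\hil_i}$.

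The key steps, in order, are as follows.
\begin{enumerate}[(a)]
\item Check that $L_i$ is well defined and linear on the algebraic tensor product. This follows from the multilinear relation of $\otimes$, since $M_i\otimes\mathrm{id}$ respects $c(\w_1+\w_2)\otimes\h=c\,\w_1\otimes\h+c\,\w_2\otimes\h$.
\item Show that $L_i$ is bounded, with $\|L_i\|\le\|M_i\|_2$. To see this, write a finite sum $\sum_k\w_k\otimes\h_k$ with $\h_k$ orthonormal in $\hil_i$, which is always possible by Gram--Schmidt. Its norm squared is then $\sum_k\|\w_k\|^2$, which $M_i$ scales by at most $\|M_i\|_2^2$. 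Boundedness lets $L_i$ extend uniquely to the completion.
\item Set $L\f=\oplus_i L_i\f_i$. This gives (ii) directly, and $L$ is bounded on the direct sum with $\|L\|\le\max_i\|M_i\|_2$.
\item Prove (i). The inclusion $L\bhilh^{\prime}\subseteq\bhilh$ is immediate from the construction. For the reverse inclusion, build $L_i^{-1}$ in the same way from $B_i^{\prime}B_i^{-1}$, again using left inverses. Then show $L_iL_i^{-1}=\mathrm{id}$ on $\bhil_{\Sigma_i}$.
\end{enumerate}

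The main obstacle is the interpretation in step (d), because $B_i\in\mathbb{R}^{m\times u}$ is generally not square. The identity $B_iB_i^{+}$ is only the projector onto $\mathrm{range}(B_i)$, so $M_i$ is not literally invertible on $\mathbb{R}^m$.

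The first fix I would try is to identify the $\w$-component of $\bhil_{\Sigma_i}$ with $\mathrm{range}(B_i)$, as the feature map \eqref{eq:feature_map_MT} suggests, since $\Phi(\x)\w=B\w\otimes\bvphi(\x)$. On that range, $M_i$ restricts to a bijection onto $\mathrm{range}(B_i)$ with inverse $B_i^{\prime}B_i^{+}$. Equivalently, if the paper effectively assumes $m=u$ with invertible $B_i$ (as in the \gls{icm} case behind \eqref{eq:equivalent_rkhs}), the inverse is exact.

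With that identification, $L$ is a bounded bijection between the two direct sums, which establishes (i). Equivalently, each $L_i$ is exactly the \gls{icm} operator $\widetilde{L}$ of \eqref{eq:equivalent_rkhs} applied to the $i$\ts{th} summand.
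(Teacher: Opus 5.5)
Your proposal is correct and follows essentially the paper's argument: the paper also works summand by summand on the vector factor and uses injectivity of $B_i$ and $B_i^\prime$ to get a bijection between their column spaces, which is exactly your range identification. The only difference is that the paper obtains the formula in (ii) by factoring $L = L_\Sigma L_{\Sigma^{\prime}}^{-1}$ through the reference space $\bhilh_0$ with all $B_i = I$, instead of writing $M_i\otimes\mathrm{id}$ and its inverse directly; your well-definedness, boundedness and completion checks and your treatment of non-square $B_i$ supply details the paper leaves implicit.
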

\begin{proof}
    \begin{enumerate}[(i)]
        \item\label{enum:L_1} We note that both $\f$ and $\f^{\prime}$ lie in the span composed of the base \glspl{rkhs} $\hil_i,~i=1,\dots,H$ and the column space of the matrices $B_i$ and $B_i^\prime$, respectively. Since, the matrices are injective, there is a bijection between the column spaces of $B_i$ and $B_i^\prime$.
        \item  We define $\bhilh_0$ as the \gls{lmc} \gls{rkhs} with $B_i = I$ for all $i$. It is clear that there is an operator $L_\Sigma$ such that $\bhilh_\Sigma = L_\Sigma \bhilh_0$. In particular, $L_\Sigma \f = \oplus_{i=1}^H B_i\f_{w,i}\otimes \f_{h,i}$. Analogously, there is an operator $L_{\Sigma^{\prime}}$ such that $\bhilh_{\Sigma^{\prime}} = L_{\Sigma^{\prime}}\bhilh_0$. Due to injectivity of $B_i$ and $B_i^\prime$, the operators $L_\Sigma$ and $L_{\Sigma^{\prime}}$ are invertible. Hence, we define 
        \begin{align*}
            L\f &= L_{\Sigma} L_{\Sigma^{\prime}}^{-1}\f\\
            &= \oplus_{i=1}^H B_i  {B_i^\prime}^{-1}\f_{w,i}\otimes \f_{h,i},\quad \forall\, \f\in \bhilh_{\Sigma^\prime}.
        \end{align*}
    \end{enumerate}
\end{proof}

It remains to show that the term $\nu$ in Theorem~\ref{th:robust_scaling_factor} can be computed for the \gls{lmc} by simply replacing $\widetilde{L}$ by $L$ from Lemma~\ref{lemma:linear_oper}.
\begin{lem}
    \label{lemma:nu_lmc}
    For the \gls{lmc} case, it holds that
    \begin{align*}
        |\bmu_{\Sigma}(\x) - \bmu_{\Sigma'}(\x)| \leq \nu \bsig_{\Sigma^{\prime}}(\x),\quad \forall \x\in\mathcal{X},~\Sigma\in\mathcal{C}_\rho,
    \end{align*}
    where  
    \begin{align*}
        \nu^2 &= \max_{\Sigma\in\mathcal{C}_\rho}\bigg\{\|\bmu_{\Sigma^{\prime}} - L^\star \bmu_\Sigma\|_{\bhilh_{\Sigma^{\prime}}}^2\\ & \hspace{1cm} + \frac{1}{\sigma_n^2} \sum_{n=1}^N \|\bmu_{\Sigma}(\x_n)-\bmu_{\Sigma^{\prime}}(\x_n)\|_2^2\bigg\}.
    \end{align*}
\end{lem}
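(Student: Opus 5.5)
The plan is to mimic the ICM argument from \cite{Luebsen2025}, which rests on the standard RKHS bound on the posterior mean. Fix $\Sigma\in\mathcal{C}_\rho$ and $\x\in\mathcal{X}$, and write $\bmu_\Sigma$ and $\bmu_{\Sigma'}$ for the posterior means computed from the same data $\tX,\ty$ with the Gram matrices $\Gamma_\Sigma$ and $\Gamma_{\Sigma'}$. Both are finite linear combinations of kernel sections $\K_\Sigma(\cdot,\x_n)$. Lemma~\ref{lemma:linear_oper} shows that $\bhilh_\Sigma = L\bhilh_{\Sigma'}$. Hence the function $L^\star\bmu_\Sigma$ lies in $\bhilh_{\Sigma'}$, and it reproduces the same point evaluations as $\bmu_\Sigma$ when paired with the feature map $\Phi_{\Sigma'}$.

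\textbf{Step 1: a representation identity.} I will first check that
\[
\langle \Phi_{\Sigma'}(\x), L^\star \g\rangle_{\bhilh_{\Sigma'}} = \langle L\Phi_{\Sigma'}(\x), \g\rangle_{\bhilh_\Sigma} = \g(\x).
\]
This uses $L\Phi_{\Sigma'}(\x)=\Phi_\Sigma(\x)$, which follows blockwise from the formula in Lemma~\ref{lemma:linear_oper}(ii): each component $B_i'\w\otimes\bvphi_i(\x)$ is mapped to $B_i\w\otimes\bvphi_i(\x)$, where ${B_i'}^{-1}$ is read as a left inverse on the range of $B_i'$. Because the direct-sum inner product splits over $i=1,\dots,H$, each summand can be handled exactly as in the ICM case. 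Therefore
\[
\bmu_\Sigma(\x)-\bmu_{\Sigma'}(\x) = \langle \Phi_{\Sigma'}(\x), L^\star\bmu_\Sigma - \bmu_{\Sigma'}\rangle_{\bhilh_{\Sigma'}},
\]
with the difference $\g := L^\star\bmu_\Sigma-\bmu_{\Sigma'}$ now a single element of $\bhilh_{\Sigma'}$.

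\textbf{Step 2: the variational bound.} I will apply the known bound for an element $\g$ of an RKHS with kernel $\K_{\Sigma'}$ and noise $\sigma_n^2$. Decompose $\g$ into its orthogonal projection onto the span of the data features plus a remainder, or equivalently use the regularized-operator identity $\bsig^2_{\Sigma'}(\x)=\Phi^\star_{\Sigma'}(\x)\big(I-\Phi_{\Sigma'}(\tX)(\Gamma_{\Sigma'}+\sigma_n^2I)^{-1}\Phi^\star_{\Sigma'}(\tX)\big)\Phi_{\Sigma'}(\x)$. Together with Cauchy--Schwarz this gives
\[
|\g(\x) - \text{(posterior mean of $\g$ from noiseless data)}|^2 \le \Big(\|\g\|^2_{\bhilh_{\Sigma'}} + \tfrac{1}{\sigma_n^2}\sum_n \|\g(\x_n)\|^2\Big)\,\bsig^2_{\Sigma'}(\x).
\]
I will then check that the data-fit part vanishes or is absorbed, since $\g(\x_n)=\bmu_\Sigma(\x_n)-\bmu_{\Sigma'}(\x_n)$. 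The cleanest route is the one from \cite{Luebsen2025}: write $\bmu_\Sigma(\x)-\bmu_{\Sigma'}(\x)$ as a linear functional of $(\g,\g(\tX))$ in the augmented Hilbert space $\bhilh_{\Sigma'}\oplus\mathbb{R}^{N}$, weighted by $1/\sigma_n^2$. In that space the representer has squared norm $\bsig^2_{\Sigma'}(\x)$, and the norm of $(\g,\g(\tX))$ is exactly the bracket in $\nu^2$. Taking the maximum over $\Sigma\in\mathcal{C}_\rho$ yields $\nu$. The bound holds componentwise for each task, which follows by working with $\Phi_{\Sigma'}(\x)\bm e_t$.

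\textbf{Main obstacle.} I expect the main difficulty to be Step 1 in the LMC setting. There the kernel mixes $H$ different base spaces $\hil_i$, and the vector-valued feature map is a sum over components. I must make sure that $L$ acts consistently on every summand, and that $L^\star$ is well defined with respect to the two different direct-sum inner products. I will first verify the identity on elementary tensors $\w\otimes\bvphi_i(\x_n)$ and then extend it by linearity and continuity. Once Step 1 holds, the remainder is a word-for-word transfer of the ICM argument.
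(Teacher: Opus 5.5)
Your argument is correct, but it is organized differently from the paper's.

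\textbf{The paper's route.} The paper works in the posterior spaces. It sets $T=\Psi_{\Sigma}^{-\frac{1}{2}}L\Psi_{\Sigma'}^{\frac{1}{2}}$ and checks $K^P_\Sigma(\x,\cdot)=TK^P_{\Sigma'}(\x,\cdot)$. It then applies Cauchy--Schwarz in $\bhilh^P_{\Sigma'}$ to get the factor $\bsig_{\Sigma'}(\x)\,\|\bmu^P_{\Sigma'}-T^\star\bmu^P_\Sigma\|$. Finally it expands that squared norm into two squared norms and a cross term. Each term is converted back into a prior-RKHS quantity plus a data sum, using the inequality from \cite{Snirivas2010} for the first one, and the pieces are reassembled into the quadratic.

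\textbf{Your route.} You subtract first, in the prior space. The identity $L\Phi_{\Sigma'}(\x)=\Phi_\Sigma(\x)$ is the same fact behind $K^P_\Sigma=TK^P_{\Sigma'}$. It makes $\g=L^\star\bmu_\Sigma-\bmu_{\Sigma'}$ a single element of $\bhilh_{\Sigma'}$ whose evaluations are exactly $\bmu_\Sigma-\bmu_{\Sigma'}$. One Cauchy--Schwarz step, $|\bm e_t^\top\g(\x)|\le\|\Psi_{\Sigma'}^{-\frac{1}{2}}\Phi_{\Sigma'}(\x)\bm e_t\|\,\|\Psi_{\Sigma'}^{\frac{1}{2}}\g\|$, then gives the bracket directly. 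It uses two facts:
\begin{itemize}
\item $\sigma_n^2\,\bm e_t^\top\Phi^\star_{\Sigma'}(\x)\Psi_{\Sigma'}^{-1}\Phi_{\Sigma'}(\x)\bm e_t$ is the $t$-th entry of $\bsig^2_{\Sigma'}(\x)$ (Appendix~\ref{sec:supplementary});
\item $\langle\g,\Psi_{\Sigma'}\g\rangle=\sigma_n^2\|\g\|^2+\|\Phi_{\Sigma'}^\star\g\|^2$.
\end{itemize}

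\textbf{Comparison.} Your version is shorter, avoids the cross term entirely, and needs no separate inequality for the first term. The paper's version mirrors the ICM proof line by line and makes the map between posterior kernel sections explicit. On your stated main obstacle: on the feature space $\oplus_i\mathbb{R}^m\otimes\hil_i$, the operator $L=\oplus_i (B_i{B_i'}^{+})\otimes I_{\hil_i}$ is bounded. Hence $L^\star$ exists, and no further extension argument is needed.

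\textbf{Two fixes to your write-up.} First, drop the preliminary version of Step~2 that bounds $\g(\x)$ minus its noiseless posterior mean. It controls the wrong quantity, since you need $|\g(\x)|$ itself. Also, the data-fit term does not vanish: it is precisely the second summand of $\nu^2$. The augmented-space (equivalently, posterior-norm) argument is the one to keep.

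Second, in $\Gamma_\Sigma$ each entry of $\ty$ observes a single task $t_n$. So $\|\Phi^\star_{\Sigma'}\g\|^2=\sum_n(\bm e_{t_n}^\top\g(\x_n))^2$ involves only the observed components. The augmented norm of $\g$ is therefore at most, not exactly, the bracket in $\nu^2$, and that inequality is all the lemma requires.
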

\begin{proof}
    The proof is given in \ifarxiv{\cref{sec:proof_nu_lmc}} \else{\cite{Luebsen2026}}\fi
\end{proof}

 The final ingredient to extend Theorem~\ref{th:robust_scaling_factor} is to compute $\gamma$. This is summarized in the following lemma.

\begin{lem}
    \label{lemma:gamma_lmc}
    Let $\gamma^2 = \max_{\Sigma\in\mathcal{C}_\rho}\max_{i=1,\dots,H}\|\Sigma_i {\Sigma_i^\prime}^{-1}\|_2$, then $\bsig_{\Sigma}^2(\x)\leq \gamma^2 \bsig_{\Sigma^{\prime}}^2(\x)$ for all $\x\in\mathcal{X}$ and $\Sigma\in\mathcal{C}_\rho$. 
\end{lem}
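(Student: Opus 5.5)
The plan is to reduce the statement to a Loewner-order comparison between the two \gls{lmc} kernels, and then use the variational characterization of the posterior variance. The main ingredients are that $\Sigma_i \preceq \gamma^2 \Sigma_i'$ for every $i$, and that posterior variances are monotone in the prior covariance. This route never passes through the operator $L$ of Lemma~\ref{lemma:linear_oper}.

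\emph{Step 1: per-component matrix inequality.} Fix $\Sigma\in\mathcal{C}_\rho$ and $i\in\{1,\dots,H\}$. Since $B_i'$ is injective, $\Sigma_i' = B_i'^\top B_i'$ is symmetric positive definite, so $\Sigma_i'^{-1/2}$ exists. The matrix $M_i=\Sigma_i'^{-1/2}\Sigma_i\Sigma_i'^{-1/2}$ is symmetric positive semidefinite. It is similar to $\Sigma_i\Sigma_i'^{-1}$, so its largest eigenvalue equals the spectral radius of $\Sigma_i\Sigma_i'^{-1}$, which is at most $\|\Sigma_i\Sigma_i'^{-1}\|_2\le\gamma^2$. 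Hence $M_i\preceq\gamma^2 I$, and conjugating with $\Sigma_i'^{1/2}$ gives $\Sigma_i\preceq\gamma^2\Sigma_i'$.

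\emph{Step 2: lift to the kernel and the Gram matrices.} For any finite set of (input, task) pairs, the Gram matrix of $\gamma^2\K_{\Sigma'}-\K_\Sigma$ is $\sum_{i=1}^H$ of the Hadamard product of a submatrix of $\gamma^2\Sigma_i'-\Sigma_i\succeq 0$ with a Gram matrix of $k_i$. By the Schur product theorem each summand is positive semidefinite, so $\K_\Sigma\preceq\gamma^2\K_{\Sigma'}$ as matrix-valued kernels. In particular the joint covariance of $(f_t(\x),\,\f(\tX))$ under $\Sigma$ is dominated by $\gamma^2$ times that under $\Sigma'$.

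\emph{Step 3: variational form of the posterior variance.} For task $t$, the posterior variance in \eqref{eq:posterior_gp} with the Gram matrix \eqref{eq:multi_task_gram_matrix} is the minimal mean-squared error of a linear predictor:
\begin{align*}
\sigma^2_{\Sigma,t}(\x)=\min_{\bm a}\Big\{\mathrm{Var}_\Sigma\big(f_t(\x)-\bm a^\top \f(\tX)\big)+\sigma_n^2\|\bm a\|_2^2\Big\}.
\end{align*}
For every fixed $\bm a$, Step 2 gives $\mathrm{Var}_\Sigma(\cdot)\le\gamma^2\,\mathrm{Var}_{\Sigma'}(\cdot)$. The noise term also satisfies $\sigma_n^2\|\bm a\|^2\le\gamma^2\sigma_n^2\|\bm a\|^2$, provided $\gamma^2\ge1$. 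Taking the minimum on both sides then yields $\sigma^2_{\Sigma,t}(\x)\le\gamma^2\sigma^2_{\Sigma',t}(\x)$ for each task $t$, which is the claimed componentwise bound.

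\emph{Main obstacle.} The delicate point is the noise term, which does not scale with the correlation matrices, so the argument requires $\gamma^2\ge1$. I would obtain this from the hypothesis of Theorem~\ref{th:robust_scaling_factor} that $\Sigma'\in\mathcal{C}_\rho$: choosing $\Sigma=\Sigma'$ in the maximum gives $\|\Sigma_i'\Sigma_i'^{-1}\|_2=1$, hence $\gamma^2\ge 1$. I would state this explicitly in the proof. I would also verify the variational formula directly, by completing the square in $\bm a$, so that the optimal $\bm a$ reproduces \eqref{eq:posterior_gp}.
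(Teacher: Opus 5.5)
Your proposal is correct and follows the same route as the paper: the per-feature bound $\Sigma_i\preceq\gamma^2\Sigma_i'$, lifted to $\K_\Sigma\preceq\gamma^2\K_{\Sigma'}$, and then monotonicity of the posterior variance in the kernel, which the paper cites from Lemma A.4 of \cite{Capone2022} and you prove directly via the best-linear-predictor form. Your explicit treatment of the noise term improves on the paper's write-up. Kernel monotonicity at fixed noise only bounds $\bsig^2_\Sigma(\x)$ by the posterior variance of $\gamma^2\K_{\Sigma'}$ at noise $\sigma_n^2$, which is $\gamma^2$ times the $\K_{\Sigma'}$-posterior variance at noise $\sigma_n^2/\gamma^2$. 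Reaching $\gamma^2\bsig^2_{\Sigma'}(\x)$ therefore needs exactly your condition $\gamma^2\ge1$ (ensured by $\Sigma'\in\mathcal{C}_\rho$), a step the paper leaves implicit and which matters, since the bound can fail when $\gamma^2<1$.
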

\begin{proof}
    From \cite{Capone2022} lemma A.4 we can conclude that if $\K_\Sigma \leq \K_{\Sigma^\prime}$ then $\bsig_{\Sigma}^2(\x)\leq \bsig_{\Sigma^\prime}^2(\x)$ for all $\x\in\mathcal{X}$.
    As shown in \eqref{eq:lmc_kernel}, the kernels can be written as $\K_\Sigma(\x,\x^{\prime}) = \sum_{i=1}^H \Sigma_i k_i(\x,\x^{\prime})$ and $\K_{\Sigma^\prime}(\x,\x^{\prime}) = \sum_{i=1}^H \Sigma_i^{\prime} k_i(\x,\x^{\prime})$. Furthermore, 
    let $\gamma_i\geq \|\Sigma_i {\Sigma_i^\prime}^{-1}\|$ then we have $ \Sigma_i \leq \gamma_i^2 \Sigma_i^{\prime}$ which implies $\Sigma_i k_i(\x,\x^{\prime}) \leq \gamma_i^2 \Sigma_i^{\prime} k_i(\x,\x^{\prime})$. Hence, $\max_{i=1,\dots,H}\gamma_i^2$ holds for all features. Finally, we take the maximum over all $(\Sigma_i)_{i=1}^H\in\mathcal{C}_\rho$ to conclude the proof.
\end{proof}

Alternatively, each feature can be scaled by an individual $\gamma_i^2$ which leads to a less conservative bound.

\begin{thm}
    Theorem~\ref{th:robust_scaling_factor} holds for the \gls{lmc} case with
    \begin{align*}
        \nu^2 &= \max_{\Sigma\in\mathcal{C}_\rho}\|\bmu_{\Sigma'} - L^\star \bmu_\Sigma\|_{\bhilh_{\Sigma'}}^2\\ & \hspace{1cm} + \frac{1}{\sigma_n^2} \sum_{n=1}^N \|\bmu_{\Sigma}(\x_n)-\bmu_{\Sigma^{\prime}}(\x_n)\|_2^2,\\
        \gamma^2 &= \max_{\Sigma\in\mathcal{C}_{\rho}}\max_{i=1,\dots,H}\|\Sigma_i {\Sigma_i^\prime}^{-1}\|_2.
    \end{align*}
\end{thm}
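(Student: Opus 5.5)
The plan is to rerun the argument behind Theorem~\ref{th:robust_scaling_factor}, as outlined in its proof sketch. The two places where the \gls{icm} structure is actually used are replaced by Lemma~\ref{lemma:nu_lmc} and Lemma~\ref{lemma:gamma_lmc}. Everything else in that argument (discretization, union bound, Lipschitz step) is agnostic to the kernel's internal structure.

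First I fix a discretization $\mathcal{I}$ of $\mathcal{X}$ with mesh size $\tau$ and condition on the event that the true tuple $\Sigma=(\Sigma_1,\dots,\Sigma_H)$ lies in $\mathcal{C}_\rho$, which has probability $1-\rho$. Conditionally on $\Sigma$, $\f$ is a sample of a \gls{gp} with kernel $\K_\Sigma$ from \eqref{eq:lmc_kernel}. Its posterior given the data is therefore Gaussian with mean $\bmu_\Sigma$ and variance $\bsig_\Sigma^2$, computed via the Gram matrix \eqref{eq:multi_task_gram_matrix}. The standard Gaussian tail bound per point and output, combined with a union bound over $\mathcal{I}$, gives
\[
|\f(\x)-\bmu_\Sigma(\x)|\leq\beta^{\frac12}\bsig_\Sigma(\x)
\]
for all $\x\in\mathcal{I}$ with probability $1-\delta$, where $\beta=2\log(|\mathcal{I}|/\delta)$. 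This uses only Gaussianity, so it is unchanged from the \gls{icm} case.

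Next I pass from $\Sigma$ to the inference tuple $\Sigma'$. By the triangle inequality,
\[
|\f(\x)-\bmu_{\Sigma'}(\x)|\leq|\f(\x)-\bmu_\Sigma(\x)|+|\bmu_\Sigma(\x)-\bmu_{\Sigma'}(\x)|.
\]
Lemma~\ref{lemma:nu_lmc} bounds the second term by $\nu\bsig_{\Sigma'}(\x)$, using the operator $L$ of Lemma~\ref{lemma:linear_oper}. Lemma~\ref{lemma:gamma_lmc} gives $\bsig_\Sigma(\x)\leq\gamma\bsig_{\Sigma'}(\x)$ for every $\Sigma\in\mathcal{C}_\rho$. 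Combining these yields $|\f(\x)-\bmu_{\Sigma'}(\x)|\leq(\nu+\gamma\beta^{\frac12})\bsig_{\Sigma'}(\x)=\bar\beta^{\frac12}\bsig_{\Sigma'}(\x)$ on $\mathcal{I}$. Both lemmas hold uniformly over $\mathcal{C}_\rho$, so the true $\Sigma$ need not be known. The two events are independent in the sense used in Theorem~\ref{th:robust_scaling_factor}: the first is conditional on $\Sigma$, the second concerns the hyper-prior. Hence the probability is $(1-\delta)(1-\rho)$.

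Finally I extend the bound from $\mathcal{I}$ to all of $\mathcal{X}$. For each $\x$ I take the closest grid point $\x_\tau$ and apply the Lipschitz bound $L_f\tau$ on $\f$ together with the moduli of continuity $\omega_\mu(\tau)$ and $\omega_\sigma(\tau)$; together they produce the offset $\psi$. The main obstacle is this step. I have to check that the Lipschitz constant of $\K_\Sigma$, and hence $L_f$, $\omega_\mu$ and $\omega_\sigma$, remain valid for the \gls{lmc}. I would bound $L_{\K}\leq\sum_i q_iL_{k_i}$, with $q_i$ the maximal diagonal entry of $\Sigma_i$ over $\mathcal{C}_\rho$. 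I would then rederive $L_f$ from the four-times differentiability of each $k_i$, using the fact that a finite sum of such kernels inherits the required smoothness. One subtlety remains: $\omega_\sigma$ multiplies $\beta^{\frac12}$ in $\psi$, whereas the posterior on the grid is scaled by $\bar\beta^{\frac12}$. I would handle this exactly as in the \gls{icm} proof, by applying the continuity argument to $\bmu_\Sigma$ and $\bsig_\Sigma$ before switching to $\Sigma'$, so that only grid-point quantities are transferred by Lemma~\ref{lemma:nu_lmc} and Lemma~\ref{lemma:gamma_lmc}.
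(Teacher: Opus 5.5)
Your proposal is correct and follows the paper's route: you rerun the argument of Theorem~\ref{th:robust_scaling_factor} and substitute Lemma~\ref{lemma:nu_lmc} (with $L$ from Lemma~\ref{lemma:linear_oper}) for $\nu$ and Lemma~\ref{lemma:gamma_lmc} for $\gamma$, adjusting the Lipschitz constant and the moduli of continuity through the per-feature correlation matrices, which the paper also only sketches. Your reordering (extend to $\mathcal{X}$ for the true $\Sigma$ first, then transfer via the two lemmas) is a correct refinement that keeps $\beta^{\frac12}\omega_\sigma(\tau)$ in $\psi$; note, though, that after this reordering the lemmas are applied at arbitrary $\x\in\mathcal{X}$, not only at grid points, which is fine because both lemmas hold on all of $\mathcal{X}$.
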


\begin{proof}
    With Lemma~\ref{lemma:linear_oper}, we can compute $\nu$ in Theorem~\ref{th:robust_scaling_factor} for the \gls{lmc} case by substituting $\widetilde{L}$ by $L$ and take the norms with respect to $\bhilh$. 
    Finally, Lemma~\ref{lemma:gamma_lmc} provides the definition of $\gamma$ for the \gls{lmc} case.
\end{proof}

\section{Simulation}
This section compares the \gls{icm} and \gls{lmc} on a synthetic benchmark ($u=2$) over a four-dimensional input space and squared exponential base kernels, see \cite{Luebsen2025b} for the code. We sample objective functions from a \gls{gp} with a two-feature \gls{lmc} kernel with a dominant, highly correlated feature and a minor, uncorrelated one (see \cref{tab:sim_params} for specific kernel parameters). Due to the fine discretization grid ($\tau=0.001$), we neglect $\psi$ in the computation of the scaling factor $\bar{\beta}$. To favor correlations, the hyperprior for the first feature is an \gls{lkj} distribution with parameter $(\eta = 0.1)$, while the second is fixed to the identity. The norm of the first \gls{rkhs} function is set to $(\|\cdot\|_{\bhil_1}=30.5)$ and the second to $(\|\cdot\|_{\bhil_2}=10.2)$. In the simulation the supplementary task is evaluated eight times for every main task evaluation. Full experimental parameters and safety thresholds are detailed in \cref{tab:sim_params}.

\begin{table}[h]
    \centering
    \caption{Simulation parameters for the synthetic benchmark comparison between \gls{icm} and \gls{lmc}.}
    \label{tab:sim_params}
    \begin{tabular}{@{}llc@{}}
        \toprule
        \textbf{Parameter} & \textbf{Symbol} & \textbf{Value} \\
        \midrule
        Input Space & $\mathcal{X}$ & $[-1,1]^4$ \\
        Number of Tasks & $u$ & $2$ \\
        \midrule
        Base kernels & $k_i(\cdot,\cdot)$ & Squared Exponential \\
        Dominant Feature Norm & $\|\cdot\|_{\bhil_1}$ & $30.5$ \\
        Dominant Feature Correlation & $r_1$ & $0.85$ \\
        Minor Feature Norm & $\|\cdot\|_{\bhil_2}$ & $10.2/6.5$ \\
        Minor Feature Correlation & $r_2$ & $0.0$ \\
        \midrule
        Discretization Grid & $\tau$ & $0.001$ \\
        Set Probability & $\rho$ & $0.05$ \\
        Failure Probability & $\delta$ & $0.05$ \\
        Safety Threshold & $T$ & $1$ \\
        \midrule
        Feature 1 Prior & -- & \gls{lkj} ($\eta = 0.1$) \\
        Feature 2 Prior & -- & Identity (Fixed) \\
        Repetitions & -- & $20$ \\
        Acq. function & -- & UCB \\
        \bottomrule
    \end{tabular}
\end{table}

\begin{figure}
    \centering
    \includegraphics[width=0.48\textwidth]{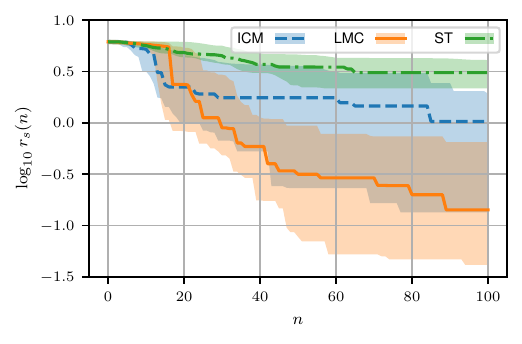}
    \caption{Convergence plot comparing \gls{lmc}, \gls{icm}, and single task safe \gls{bo} on the synthetic benchmark problem. The shaded areas represent the $25\%-75\%$ quantile and the solid lines the median over $20$ runs.}
    \label{fig:rkhs_samps_comp}
\end{figure}

\begin{figure}
    \centering
    \includegraphics[width=0.48\textwidth]{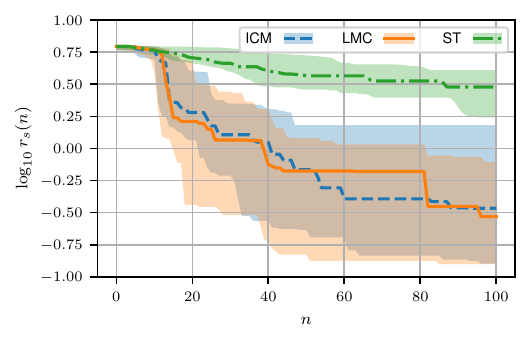}
    \caption{Convergence plot comparing \gls{lmc}, \gls{icm}, with the same setting as in \cref{fig:rkhs_samps_comp} but with a smaller norm of the second feature $(\|\cdot\|_{\bhil_2} = 6.5)$.}
    \label{fig:rkhs_samps_comp2}
\end{figure}

\cref{fig:rkhs_samps_comp} shows the convergence plot of the $\log_{10}$ simple regret $r_s(n) = f(\x^*) - f(\x_n^+)$ over the number of evaluations $n$ of the main task, where $\x^*$ is the global optimum and $\x_n^+$ the best safe point found after $n$ evaluations.
The results demonstrate that both \gls{mtbo} algorithms outperform the single-task safe \gls{bo} significantly. This is because the multi-task kernel identifies the dominant correlation feature and utilizes this to extend the safe region. This is possible due to the extra points drawn from the simulation which are not constrained to a safe region. In particular, the \gls{lmc} is able to separate the function into its two base features, whereas, the \gls{icm}, in contrast, must compromise by using a single correlation matrix for both features. This results in a weaker task correlation which increases $\bar{\beta}$ and makes the safe region more conservative. As shown in \cref{fig:rkhs_samps_comp2}, the performance of both multi-task models becomes nearly identical when the norm of the minor feature is reduced to $(\|\cdot\|_{\bhil_2} = 6.5)$.  The single-task safe \gls{bo} performs worst, as it becomes trapped in a local safe set. Conversely, both multi-task approaches successfully transfer knowledge from the simulation to the real task, which enables identification of a disconnected safe set when the task correlation is sufficiently strong. All safe \gls{bo} methods maintain safety throughout the optimization process with zero failures.

\section{Conclusion}
We extended safety guarantees for multi-task Bayesian optimization with uncertain correlation matrices from \glspl{icm} to \glspl{lmc}. This was achieved by extending the uniform error bounds to \glspl{lmc} and deriving the necessary components to compute a robust scaling factor. Finally, a numerical comparison on a synthetic benchmark demonstrated that \glspl{lmc} can provide superior performance over \glspl{icm}. Both multi-task approaches can outperform single-task safe \gls{bo} by transferring knowledge from the simulation to the real task. This also allows for an identification of a non-connected safe set. 
The main limitations of the \gls{lmc} as well as the \gls{icm} approach is the increased computational complexity for estimating the set $\mathcal{C}_\rho$ and computing the scaling factor $\bar{\beta}$.
Future work could focus on extending the theoretical analysis to other multi-task kernels beyond \glspl{lmc}, e.g., convolution processes and other non-separable kernels and merging approximation techniques for \glspl{gp} to reduce computational complexity.



\section*{DECLARATION OF GENERATIVE AI AND AI-ASSISTED TECHNOLOGIES IN THE WRITING PROCESS}
During the preparation of this work the author(s) used Gemini 2.5 Pro in order to check the language and get inspiration for the phrasing. After using this tool/service, the author(s) reviewed and edited the content as needed and take(s) full responsibility for the content of the publication.

\bibliography{biblio}
\ifarxiv
\appendix
\section{Supplementary Material}
\label{sec:supplementary}
The Gaussian process posterior mean and variance is given by
 \begin{align*}
     \bmu_\Sigma(\x) &= K(\x,\tX)(\K_\Sigma+\sigma_n^2I)^{-1}\ty,\\
     {\bsig^2_{\Sigma}}(\x) &= K(\x,\x) - K(\x,\tX)(\K_\Sigma+\sigma_n^2I)^{-1}K(tX,\x).
 \end{align*}
According to Mercers theorem single-task kernels can be expanded, \[k(\x,\x') = \sum_{i=1}^{N_\hil} \lambda_i \psi_i(\x)\psi_i(\x') = \bvphi^\star(\x) \bvphi(\x'),\] where $\lambda_i$ denote the eigenvalues. The same applies for multi-task kernels
\begin{align*}
    K(\x,\x') = \sum_{i=1}^{N_\hil} \lambda_i \Psi_i^\star(\x) \Psi_i(\x') = \Phi^\star(\x) \Phi(\x').
\end{align*}
Using this decomposition, the kernel matrices can be expressed as
\begin{align*}
    K_\Sigma &= \Phi_\Sigma^\star\Phi_\Sigma,\\
    K_\Sigma(\x,\tX) &= \Phi_\Sigma^\star(\x)\Phi_\Sigma,
\end{align*}
where $\Phi_\Sigma = [\Phi_\Sigma(\x_1),\dots,\Phi_\Sigma(\x_n)]$ denotes the stacked version of the element-wise evaluation of $\Phi_\Sigma$ at each training point.
Hence, the posterior mean and standard deviation can be rewritten as
\begin{align*}
    \bmu_\Sigma(\x) &= \Phi_\Sigma^\star(\x)\Phi_\Sigma(\Phi_\Sigma^\star\Phi_\Sigma+\sigma_n^2I)^{-1}\ty,\\
    \bsig^2_\Sigma(\x) &= \Phi_\Sigma^\star(\x)\Phi_\Sigma(\x) - \Phi_\Sigma^\star(\x)\Phi_\Sigma\\
    &\quad(\Phi_\Sigma^\star\Phi_\Sigma+\sigma_n^2I)^{-1}\Phi_\Sigma^\star\Phi_\Sigma(\x).
\end{align*}

The goal is now to represent the posterior mean in the space of the posterior kernel $K_\Sigma^P$, i.e., \[\bmu_\Sigma(\x) = \langle K_\Sigma^P(\x,\cdot), \bmu^P_{\Sigma}\rangle_{\bhil_\Sigma}.\] 
To do so, we use the following fact as shown in \cite{chowdhury17a}:
\begin{equation}
    \label{eq:trick1}
    \begin{aligned}
        &\Phi_\Sigma\left(\Phi_\Sigma^\star\Phi_\Sigma+\sigma_n^2I\right)^{-1}\Phi_\Sigma^\star\\  &\qquad = \left(\Phi_\Sigma\Phi_\Sigma^\star+\sigma_n^2I\right)^{-1} \Phi_\Sigma\Phi_\Sigma^\star.
    \end{aligned}
\end{equation}

First, we derive the evaluation functional $K_\Sigma^P(\x,\cdot)$ from the posterior variance,
\begin{align*}
    K_\Sigma^P(\x,\x') &= K_\Sigma(\x,\x') - K_\Sigma(\x(\K_\Sigma+\sigma_n^2I)^{-1}K_\Sigma\x')\\
    &= \Phi_\Sigma^\star(\x)\Phi_\Sigma(\x') - \Phi_\Sigma^\star(\x)\Phi_\Sigma\\
    &\quad(\Phi_\Sigma^\star\Phi_\Sigma+\sigma_n^2I)^{-1}\Phi_\Sigma^\star\Phi_\Sigma(\x')\\
    &= \Phi_\Sigma^\star(\x)(I - \Phi_\Sigma(\Phi_\Sigma^\star\Phi_\Sigma+\sigma_n^2I)^{-1}\\
    &\quad \Phi_\Sigma^\star)\Phi_\Sigma(\x')\\
    &= \Phi_\Sigma^\star(\x)(I - (\Phi_\Sigma\Phi_\Sigma^\star+\sigma_n^2I)^{-1}\\
    &\quad \Phi_\Sigma\Phi_\Sigma^\star)\\
\end{align*}
Expanding the identity operator by \hfill $ (\Phi_\Sigma\Phi_\Sigma^\star+\sigma_n^2I) (\Phi_\Sigma\Phi_\Sigma^\star+\sigma_n^2I)^{-1}$ leads to
\begin{align*}
    K_\Sigma^P(\x,\x') &= \sigma_n^2 \Phi_\Sigma^\star(\x)(\Phi_\Sigma\Phi_\Sigma^\star+\sigma_n^2I)^{-1}\Phi_\Sigma(\x').
\end{align*}
Hence, the evaluation functional is given by
\begin{align*}
    K_\Sigma^P(\x,\cdot) = \sigma_n(\Phi_\Sigma\Phi_\Sigma^\star+\sigma_n^2I)^{-\frac{1}{2}}\Phi_\Sigma(\x) 
\end{align*}
To express the posterior mean in terms of this evaluation functional, we first apply \eqref{eq:trick1} to obtain
\begin{align*}
    \bmu_\Sigma(\x) &= \Phi_\Sigma^\star(\x)\Phi_\Sigma(\Phi_\Sigma^\star\Phi_\Sigma+\sigma_n^2I)^{-1}\ty\\
     &=\Phi_\Sigma^\star(\x)(\Phi_\Sigma\Phi_\Sigma^\star+\sigma_n^2I)^{-1}\Phi_\Sigma\ty
\end{align*}
Thus, $\bmu_\Sigma(\x) = \langle K_\Sigma^P(\x,\cdot), \bmu_\Sigma^P\rangle_{\bhil_\Sigma}$ implies that \[\bmu^P_{\Sigma} =  (\Phi_\Sigma\Phi_\Sigma^\star+\sigma_n^2I)^{-\frac{1}{2}}\Phi_\Sigma\ty.\]
In contrast the prior \gls{rkhs} representation is given by \[\bmu_\Sigma =  (\Phi_\Sigma\Phi_\Sigma^\star+\sigma_n^2I)^{-1}\Phi_\Sigma\ty.\]

\section{Proof outline of Theorem~\ref{th:robust_scaling_factor}}
\label{sec:proof_outline}
 For a finite discrete set $\mathcal{I} \subset \mathcal{X}$ \cite{Snirivas2010} showed that for the given $\beta$ it holds that
     \begin{align} 
        \label{eq:multi_bayes_concentration}
        \mathbb{P}\left\{|\f([\x])-\bmu_\Sigma([\x])|\leq \beta^\frac{1}{2}\bsig_\Sigma([\x]), ~\forall[\x]\in\mathcal{I}\right\}\geq 1-\delta.
    \end{align}
    Since $\mathcal{X}$ is compact, for every $\tau>0$ the cardinality $|\mathcal{I}|$ of the discretization with mesh size $\tau$ is finite. The extension from the discrete set $\mathcal{I}$ to the entire compact set $\mathcal{X}$ is done by bounding the discretization error using the Lipschitz constant $L_f$ of the sample paths as well as the moduli of continuity $\omega_\mu$ and $\omega_\sigma$ of the posterior mean and standard deviation, respectively. This yields the additional term $\psi$. The procedure up to here is similar to the single task case by \cite{Lederer2019}.  
    
    For the multi-task case, the bound in \eqref{eq:multi_bayes_concentration} need to be adapted. More precisely, the scaling factor $\beta$ is replaced by a $\bar{\beta}$ such that the bound holds for all correlation matrices $\Sigma$ in $\mathcal{C}_\rho$. First one selects a nominal correlation matrix $\Sigma^{\prime}\in\mathcal{C}_\rho$ which is used for inference. For the posterior mean we can write $|\f(\x)-\bmu_\Sigma(\x)|\leq |\f(\x)-\bmu_{\Sigma'}(\x)|+|\bmu_{\Sigma'}(\x)-\bmu_\Sigma(\x)|$. 
    The latter term can be bounded by $\nu\bsig_{\Sigma'}(\x)$. 
    Next, the posterior standard deviation is bounded $\bsig_{\Sigma}(\x)\leq \gamma \bsig_{\Sigma'}(\x)$.

\section{Proof of Lemma~\ref{lemma:nu_lmc}}
\label{sec:proof_nu_lmc}
The proof follows the same steps as in Lemma 13 by \cite{Luebsen2025}. We denote $\bhilh^P_\Sigma$ as the corresponding \gls{rkhs} of the posterior kernel $K^P_\Sigma$. Additionally, we define the operator \begin{align*}
        T \colon& \bhilh^P_{\Sigma^{\prime}} \to \bhilh^P_{\Sigma},\\
        & \g \mapsto \Psi_{\Sigma}^{-\frac{1}{2}}L \Psi_{\Sigma^{\prime}}^{\frac{1}{2}}\g,
    \end{align*}
    where $\Psi_{\Sigma} = (\Phi_{\Sigma}\Phi^\star_{\Sigma} + \sigma_n^2 I)$ and
    $\Psi_{\Sigma^{\prime}} = (\Phi_{\Sigma^{\prime}}\Phi^\star_{\Sigma^{\prime}} + \sigma_n^2 I)$. The operator $\Phi_{\Sigma}= [\Phi_\Sigma(\x_i)]_{i=1}^N$ denotes the stacked feature map evaluated at the data points $\x_i\in\tX$. Note that we have $\Gamma_\Sigma = \Phi^\star_{\Sigma} \Phi_{\Sigma}$ where the position of the $\star$ is swapped. 
    Let \[\bmu_\Sigma(\x) = \langle K^P_\Sigma(\cdot,\x), \bmu^P_\Sigma\rangle_{\bhilh^P_\Sigma}.\] It can be shown, see Appendix~\ref{sec:supplementary}, that 
    \begin{align*} 
        K^P_\Sigma(\x,\cdot) &= \Psi^{\frac{1}{2}}_\Sigma \Phi_\Sigma(\x)\\
        & = \left(\Phi_\Sigma\Phi^\star_{\Sigma} + \sigma_n^2 I\right)^{-\frac{1}{2}} \Phi_\Sigma(\x).
    \end{align*}
    From the definition of $T$ we can see that $K_\Sigma^P(\x,\cdot) = T K_{\Sigma^{\prime}}^P(\x,\cdot)$.
    Hence, we have
    \begin{align*}
        &|\bmu_{\Sigma}(\x) - \bmu_{\Sigma^{\prime}}(\x)| \\
        &= |\langle K^P_\Sigma(\cdot),\bmu^P_\Sigma\rangle_{\bhilh^P_\Sigma} - \langle K^P_{\Sigma^{\prime}}(\cdot),\bmu^P_{\Sigma^{\prime}}\rangle_{\bhilh^P_{\Sigma^{\prime}}}|\\
        &= |\langle K^P_{\Sigma^{\prime}}(\cdot), \bmu^P_\Sigma - T^\star \bmu^P_{\Sigma^{\prime}}\rangle_{\bhilh^P_{\Sigma^{\prime}}}|\\
        &\leq \|K^P_{\Sigma^{\prime}}(\cdot)\|_{\bhilh^P_{\Sigma^{\prime}}} \|\bmu^P_\Sigma - T^\star \bmu^P_{\Sigma^{\prime}}\|_{\bhilh^P_{\Sigma^{\prime}}}\\
        &= \bsig_{\Sigma^{\prime}}(\x) \|\bmu^P_\Sigma - T^\star \bmu^P_{\Sigma^{\prime}}\|_{\bhilh^P_{\Sigma^{\prime}}}.
    \end{align*}
Using the linearity of inner products, we have
        \begin{align*}
            \|\bmu_{\Sigma'}^{P}-T^\star\bmu_{\Sigma}^P\|_{\bhil_{\Sigma'}^P}^2 &= 
            \underbrace{\|\bmu_{\Sigma'}^{P}\|^2_{\bhil_{\Sigma'}^P}}_{(1)} + 
            \underbrace{\|T^\star\bmu_{\Sigma}^P\|_{\bhil_{\Sigma'}^P}^2}_{(2)}\\ &- 
            \underbrace{2\langle \bmu_{\Sigma'}^{P},T^\star\bmu_{\Sigma}^P\rangle_{\bhil_{\Sigma'}^P}}_{(3)}.
        \end{align*}
        For (1), we use from \cite{Snirivas2010} that
        \begin{align*}
            \|\bmu_{\Sigma'}^{P}\|^2_{\bhil_{\Sigma'}^P} &\leq \|\bmu_{\Sigma'}\|_{\bhil_{\Sigma'}}^2 +\frac{1}{\sigma_n^2}\sum_{i=1}^N \|\bmu_{\Sigma'}(\x_i)\|_2^2.
        \end{align*}
        For (2), the term can be expanded as 
        \begin{align*}
            \|T^\star\bmu_{\Sigma}^P\|_{\bhil_{\Sigma'}^P}^2 &= \langle T^\star\bmu_{\Sigma}^P,T^\star\bmu_{\Sigma}^P\rangle_{\bhil_{\Sigma'}^P}\\
            &= \frac{1}{\sigma_n^2} \langle \Psi_{\Sigma'}^{\frac{1}{2}} L^\star \bmu_{\Sigma},\Psi_{\Sigma'}^{\frac{1}{2}} L^\star \bmu_{\Sigma}\rangle_{\bhil_{\Sigma'}^P}\\
            & = \frac{1}{\sigma_n^2} \langle L^\star \bmu_{\Sigma}, (\Phi_{\Sigma'}\Phi_{\Sigma'}^\top+\sigma_n^2I) L^\star \bmu_{\Sigma}\rangle_{\bhil_{\Sigma'}^P}\\
            &= \|L^\star \bmu_{\Sigma}\|_{\bhil_{\Sigma'}}^2+ \frac{1}{\sigma_n^2} \langle \bmu_{\Sigma},L\Phi_{\Sigma'}\Phi_{\Sigma'}^\top L^\star \bmu_{\Sigma}\rangle_{\bhil_{\Sigma'}}\\ 
            &= \|L^\star \bmu_{\Sigma}\|_{\bhil_{\Sigma'}}^2+ \frac{1}{\sigma_n^2}\sum_{i=1}^n \|\bmu_\Sigma(\x_i)\|_2^2.
        \end{align*}
        The last term, (3), can be expanded as 
        \begin{align*}
            &\langle \bmu_{\Sigma'}^{P},T^\star\bmu_{\Sigma}^P\rangle_{\bhil_{\Sigma'}^P} = \frac{1}{\sigma_n^2}\langle \bmu_{\Sigma'}^{P},\Psi_{\Sigma'}^{\frac{1}{2}} L^\star  \bmu_{\Sigma}\rangle_{\bhil_{\Sigma'}^P}\\
            \quad &= \frac{1}{\sigma_n^2}\langle \Psi_{\Sigma'}^{-\frac{1}{2}}\Phi_{\Sigma'}\tilde{\ty},\Psi_{\Sigma'}^{\frac{1}{2}} L^\star \bmu_{\Sigma}\rangle_{\bhil_{\Sigma'}}\\
            \quad &= \frac{1}{\sigma_n^2}\langle \Phi_{\Sigma'}\tilde{\ty},L^\star \bmu_{\Sigma}\rangle_{\bhil_{\Sigma'}}\\
            \quad &= \frac{1}{\sigma_n^2}\langle \Psi_{\Sigma'}^{-1}\Phi_{\Sigma'}\tilde{\ty},\Psi_{\Sigma'}L^\star \bmu_{\Sigma}\rangle_{\bhil_{\Sigma'}}\\
            \quad &= \frac{1}{\sigma_n^2}\langle \bmu_{\Sigma'},(\Phi_{\Sigma'}\Phi_{\Sigma'}^\top+\sigma_n^2I) L^\star \bmu_{\Sigma}\rangle_{\bhil_{\Sigma'}}\\
            \quad &= \langle \bmu_{\Sigma'},L^\star  \bmu_{\Sigma}\rangle_{\bhil_{\Sigma'}} + \frac{1}{\sigma_n^2}\sum_{i=1}^n |\bmu_\Sigma(\x_i)^\top\bmu_{\Sigma'}(\x_i)|^2. 
        \end{align*}
        Finally, we observe that the sum of all terms admit the quadratic equation given in the claim.

\section{Expressiveness of LMCs}
In this subsection we illustrate the differences between functions sampled from an \gls{icm} and \gls{lmc}. In the first figure, an \gls{icm} is used with high correlation, in the second an \gls{lmc} which is composed of the same strongly correlated feature as in the first figure and an uncorrelated, smaller feature. The dominant feature may capture the global behaviour shared by reality and simulation, as described by the first-principles model. By contrast, the smaller, uncorrelated feature may represent secondary effects, such as disturbances or friction, that cause local deviations and can shift the global optimum.
\begin{figure}[h]
    \centering
    \includegraphics{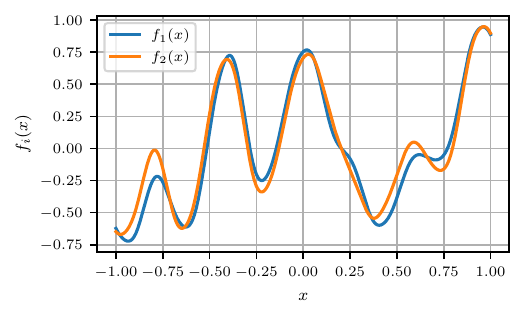}
    \caption{Functions generated from an \gls{icm} with high correlation.}
    \label{fig:mtrkhs}
\end{figure}
\begin{figure}[h]
    \centering
    \includegraphics{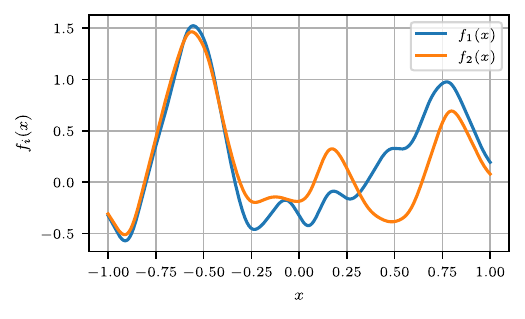}
    \caption{Functions generated from a \gls{lmc} with two base kernels. The first feature is the same as in \cref{fig:mtrkhs} and the second feature has a smaller \gls{rkhs} norm and no correlation between the tasks.}
    \label{fig:mixed_mtrkhs}
\end{figure}
\else
\fi
\end{document}